\documentclass[letterpaper, 10 pt, conference]{ieeeconf}  
\IEEEoverridecommandlockouts                 
\overrideIEEEmargins

\usepackage{graphicx}
\usepackage{amsmath,amssymb}
\usepackage{Cris}

\usepackage{ntheorem}

\newtheorem{proposition}{Proposition}
\newtheorem{assum}{Assumption}
\newtheorem*{problem}{Problem}
 
\usepackage{subfigure} 
\usepackage{algorithm}
\usepackage{algpseudocode}
\usepackage{cite} 
\usepackage{color}

\newcommand{\nquad}{\mkern-15mu}
\newcommand{\nsquad}{\mkern-10mu}
\DeclareMathOperator{\diag}{diag} 

\usepackage{nicefrac}
\usepackage{paralist}

 \graphicspath{{images/}}

\begin{document}
\title{\LARGE \bf
Achieving the Desired Dynamic Behavior in Multi-Robot Systems Interacting with the Environment}

\author{Lorenzo Sabattini, Cristian Secchi and Cesare Fantuzzi
\thanks{ 
Authors are with the Department of Sciences and Methods for Engineering (DISMI), University of Modena and Reggio Emilia, Italy {\tt\small{\{lorenzo.sabattini, cristian.secchi, cesare.fantuzzi\}@unimore.it}}}
}

\maketitle
\thispagestyle{empty}
\pagestyle{empty}

\begin{abstract}                          
In this paper we consider the problem of controlling the dynamic behavior of a multi-robot system while interacting with the environment. In particular, we propose a general methodology that, by means of locally scaling inter-robot coupling relationships, leads to achieving a desired interactive behavior. The proposed method is shown to guarantee passivity preservation, which ensures a safe interaction. The performance of the proposed methodology is evaluated in simulation, over large-scale multi-robot systems.
\end{abstract}

\section{Introduction} 

This paper proposes a general decentralized methodology for achieving a desired overall dynamic behavior for a multi-robot system interacting with the environment.

Typically, the behavior of a multi-robot system is defined by the interplay among basic control actions, such as aggregation, swarming, formation control, coverage and synchronization~\cite{ganguli2009,fax2004,olfatisaber2007,sabattiniauro2011}.  
%
%
 Modifying those basic actions makes it possible to change the characteristic properties of the overall multi-robot system. Along these lines, several methods can be found in the literature that tune the inter-robot coupling actions to modify global \emph{geometric properties} of the group~\cite{fax2004,ji2007,yang2008,xiao2009}, in terms of relative positions. 
 
%

Besides geometric properties, it is often of interest to regulate some \emph{topological properties} of the multi-robot systems, such as connectivity~\cite{ji2007,sabattiniijrr2013}, bi-connectivity~\cite{zareh2016,ahmadi2006}, controllability~\cite{sabattini2014} or rigidity~\cite{zelazo2012}.

In this paper we consider the problem of achieving a desired dynamic behavior when the multi-robot system is interacting with the environment, by means of appropriately tuning the coupling among neighboring robots. This problem was addressed in~\cite{urcola2011}, where an observation and estimation scheme was defined for understanding the behavior of humans. In particular, a common scaling factor was introduced, to reduce the inter-robot forces and, thus, impose constraints on the velocities and accelerations of the robots, when needed.

However, it is worth noting that uniformly scaling down the interaction forces among all the robots might lead to a too conservative solution, where connections among robots become too loose, and the primary objective of the multi-robot system can not be correctly fulfilled. 

In this paper we propose a strategy for achieving a desired interactive behavior of a multi-robot system with the environment. To this aim we will adopt a passivity based approach. In fact, guaranteeing the passivity of the multi-robot system is a sufficient condition for ensuring a stable behavior during the interaction with the, even poorly known, environment \cite{franchi2012}. Given a general passive cooperative nominal behavior of the multi-robot system, when a robot interacts with the environment, a desired interactive behavior is achieved by nonlinearly scaling the coupling with its neighbors. The proposed control of interaction is intended as a low-level control layer, to be coupled with some nominal control action, and is designed in a local manner, in order to affect as less as possible the nominal behavior of the multi-robot system.



The paper is organized as follows. Section~\ref{sec:notation} introduces the notation used in the paper. Problem formulation is provided in Section~\ref{sec:problem}. A method for tuning the coupling gains while preserving passivity is described in Section~\ref{sec:tunable}. This methodology is exploited in Section~\ref{sec:variation} for achieving the desired viscoelastic dynamic behavior. Simulations are described in Section~\ref{sec:simulations}, and concluding remarks are given in Section~\ref{sec:conclusions}.
%
%
%
%
%
%


\section{Notation}\label{sec:notation}
The symbol $\mathbb{I}_{m} \in \mathbb{R}^{m \times m}$ will indicate the identity matrix of dimension $m$, and the symbol $\mathbb{O}_{m,n}\in\mathbb{R}^{m \times n}$ will indicate the null matrix of dimension $m\times n$. For ease of notation, we will omit the dimension of the matrices when they appear clearly from the context.


Let $\Omega\in\mathbb{R}^{\rho\times\sigma}$ be a generic matrix. Then, we define $\Omega\left[i,j\right]\in\mathbb{R}$ as the element $\left(i,j\right)$ of $\Omega$. Moreover, the symbol $\otimes$ will be used to represent the Kronecker product.
\section{Problem Formulation}\label{sec:problem}
\newcounter{mytempeqncnt}
\newcounter{mytempeqncnt2}
\setcounter{mytempeqncnt2}{5}
\begin{figure*}[!t]
	\normalsize
	\setcounter{mytempeqncnt}{\value{equation}}
	\setcounter{equation}{\value{mytempeqncnt2}}
	\begin{equation}
	\label{eq:vectors}
	\resizebox{0.95\hsize}{!}{$
		\begin{array}[l]{lllll}
		F^e=
		\begin{pmatrix}
		F^e_1 \\ \vdots \\ F^e_N  
		\end{pmatrix} \in\calbR^{3N} & F^c=
		\begin{pmatrix}
		F^c_1 \\ \vdots \\ F^c_N  
		\end{pmatrix} \in\calbR^{3N} & p=
		\begin{pmatrix}
		p_1 \\ \vdots \\ p_N  
		\end{pmatrix} \in\calbR^{3N} 
		& \chi=\begin{pmatrix}
		\chi_1 \\ \vdots \\ \chi_{\bar{N}}
		\end{pmatrix} =
		\begin{pmatrix}
		x_{1,2} \\ \vdots \\ x_{N-1,N}  
		\end{pmatrix} \in\calbR^{3\bar{N}} 
		& \beta=\begin{pmatrix}
		\beta_1 \\ \vdots \\ \beta_{\bar{N}}
		\end{pmatrix} =
		\begin{pmatrix}
		\beta_{1,2} \\ \vdots \\ \beta_{N-1,N}  
		\end{pmatrix} \in\calbR^{3\bar{N}} 
		\end{array}
		$}
	\end{equation}
	\setcounter{equation}{\value{mytempeqncnt}}
	\hrulefill
\end{figure*}
Consider a system composed of $N$ robots moving in a three-dimensional environment, whose dynamics are modeled as follows: 
\begin{equation}
\label{eq:agent}
m_i\ddot{x}_i=w_i \quad i=1, \dots, N 
\end{equation}
where $x_i\in\mathbb{R}^3$ is the $i$-th robot's position,  $m_i>0$ is the $i$-th robot's mass, and $w_i\in\mathbb{R}^3$ collects control inputs and all the external forces each robot is subject to.

We consider the case where each robot is controlled in such a way that some desired cooperative behavior is achieved, while robots can interact with the environment. Hence, we consider the following generic input model: 
\begin{equation}
\label{eq:externalactions}
w_i=-\nquad\sum_{j=1, j\neq i}^N\nsquad\nabla V\left(x_{i,j}\right)-\nquad\sum_{j=1, j\neq i}^N\nsquad\beta_{i,j}\left(\dot{x}_i - \dot{x}_j\right)  +F_i^c  -b_i\dot{x}_i +F_i^e
\end{equation}
where $x_{i,j} = x_i - x_j$. 
 The terms in~\eqref{eq:externalactions} are defined as follows.
\setdefaultleftmargin{0.25cm}{}{}{}{}{}
\begin{enumerate}
	\item The term $-\nquad\displaystyle\sum_{j=1, j\neq i}^N\nsquad\nabla V\left(x_{i,j}\right)$ represents the \textbf{coupling} among robots. In particular, we consider each robot to interact with its neighbors, namely those robots whose distance is smaller than a certain threshold $R>0$, implementing a gradient descent of the artificial potential field $V\left(x_{i,j}\right) \geq 0$~\cite{sabattiniauro2011,leonard2001,franchi2012,falconi2015},
%
	 that has a global minimum at the desired inter-robot distance $\left\|x_{i,j}\right\| = \delta_d > 0$. The potential field is then designed in such a way that an attractive force is generated  if $\delta_d \leq \left\|x_{i,j}\right\| \leq R$, and a repulsive force is generated if $\left\|x_{i,j}\right\| < \delta_d $, such that the inter-robot distance does never go below the safety value $\delta_s$, with $0 < \delta_s < \delta_d$. A zero force is generated if two robots are too far away from each other, namely if $\left\|x_{i,j}\right\| > R$. 
	
	
	This kind of coupling represents an elastic interconnection among the robots, by means of nonlinear springs.
	
	According to this definition of the coupling potential, we define the $i$-th robot's neighborhood as follows:
	\begin{equation}
	\mathsf{N}_i = \left\{ j \neq i \text{ such that } \left\|x_{i,j}\right\| \leq R \right\}
	\label{eq:neighborhood}
	\end{equation}
	 %
	 %
	 %
	\item \textbf{Inter-robot damping} is represented by the term $-\nquad\displaystyle\sum_{j=1, j\neq i}^N\nsquad\beta_{i,j}\left(\dot{x}_i - \dot{x}_j\right)$, where $\beta_{i,j}$ is defined as follows:
	\begin{equation}
	\beta_{i,j} = \left\{\begin{array}{ll}
	\beta_{i,j}\geq 0  & \text{ if } \, j\in\mathsf{N}_i\\
	0 & \text{otherwise}
	\end{array}  \right.
	\end{equation}
	Together with the definition of the previously introduced coupling term, the design of $\beta_{i,j}$ leads to defining the overall desired behavior of the multi-robot system.
	\item The term $F_i^c$ represents an \textbf{additional control input} for the $i$-th robot, that can be utilized for achieving different objectives, such as imposing an offset~\cite{fax2004} or obtaining complex behaviors~\cite{sabattini2015}.
	\item The \textbf{local damping} term  $-b_i\dot{x}_i$, with $b_i>0$, represents both the viscous friction that characterizes the system and any additional damping injection obtained through a local control action. 
	\item The term $F_i^e$ represents the \textbf{interaction} force of the $i$-th robot \textbf{with the environment}. It can be either a real \emph{contact force}, measured by means of force sensors, or a \emph{virtual force}, generated by an obstacle avoidance artificial potential field~\cite{leonard2001,bouraine2012}.
\end{enumerate}



For ease of notation, we will hereafter define $V_{i,j} = V\left(x_{i,j}\right)$. Hence, considering the input defined in~\eqref{eq:externalactions}, the dynamics of the $i$-th robot introduced in~\eqref{eq:agent} can be rewritten as follows:
\begin{equation}
\label{eq:agent2}
m_i\ddot{x}_i+b_i\dot{x}_i+\nquad\sum_{j=1, j\neq i}^N\nsquad\nabla V_{i,j}+\nquad\sum_{j=1, j\neq i}^N\nsquad\beta_{i,j}\left(\dot{x}_i - \dot{x}_j\right)=F_i^e+F_i^c
\end{equation}

Besides defining how the robots coordinate among each other,~\eqref{eq:agent2} defines also \emph{how the multi-robot system interacts with the environment}: the coupling forces among the robots define the overall viscoelastic behavior of the multi-robot system. 

In this paper, we address the following problem:
\begin{problem}
	Define the coupling forces among the robots in such a way that the overall multi-robot system interacts with the environment with some desired viscoelastic behavior, while preserving its overall passivity.
\end{problem}



\section{Tuning of the Coupling Among the Robots While Preserving Passivity}\label{sec:tunable}

In this Section we will introduce a methodology for tuning the coupling among the robots while preserving passivity. For this purpose, we will rewrite the model of the multi-robot system in port-Hamiltonian form. Define then $p_i=m\dot{x}_i$ as the $i$-th robot's momentum, 
%
%
%
 let
 $\bar{N} = \frac{N(N-1)}{2}$, and consider the quantities defined in~\eqref{eq:vectors}. 
\stepcounter{equation}


Furthermore, let $\calI_{\calG}\in\mathbb{R}^{N \times \bar{N}}$ be the incidence matrix 
 of the complete graph among the robots\footnote{By \emph{complete graph} we refer to an undirected graph, in which each robot is represented by a node, and an edge exists among each pair of nodes.}. Define also $\bar{B}=\diag\left(\beta\right)$. The inter-agent damping term can then be modeled utilizing the weighted Laplacian matrix $\mathcal{L}_\beta \in \mathbb{R}^{N \times N}$ defined, as shown in~\cite{ji2007}, as $\mathcal{L}_\beta = \calI_{\calG}\, \bar{B}\, \calI_{\calG}^T$. 
Define now $M=\diag\left(m_1,\dots,m_N\right)$  and 
\begin{equation}
B = \mathcal{L}_\beta + \diag\left(b_1, \dots, b_N\right)
\label{eq:damping}
\end{equation}
as the inertia and damping matrix of the multi-robot system, respectively. Moreover, define  $\calI=\calI_{\calG}\otimes \mathbb{I}_3$.
%

The model of the multi-robot system can then be given in port-Hamiltonian form as follows:
\begin{equation}
  \label{eq:pHfleet}
  \resizebox{\hsize}{!}{$
  \left\{
    \begin{array}[l]{ll}
      \begin{pmatrix}
        \dot{p} \\ \dot{\chi}
      \end{pmatrix}=\left[
        \begin{pmatrix}
          \mathbb{O} & \calI \\-\cal I^T & \mathbb{O}
        \end{pmatrix}-
        \begin{pmatrix}
          B & \mathbb{O} \\ \mathbb{O} & \mathbb{O}
        \end{pmatrix}\right]
      \begin{pmatrix}
        \parder{H}{p} \\ \parder{H}{\chi}
      \end{pmatrix}+G(F^e+F^c) \\
v=G^T  \begin{pmatrix}
        \parder{H}{p} \\ \parder{H}{\chi}
      \end{pmatrix}
    \end{array}
\right.
$}
\end{equation}
where $v=\left(\dot{x}_1^T \ldots \dot{x}_N^T \right)^T\in\mathbb{R}^{3N}$ is the velocity vector, $G = \left( \mathbb{I}_{3N}\quad \mathbb{O}_{3\bar{N},3N}\right)^T$, 
%
%
 and $H$ is the total energy of the system given by:
\begin{equation}
  \label{eq:Htemp}
  H=\sum_{i=1}^N \calK_i(p_i)+\sum_{i=1}^{N}\sum_{i=j, j\neq i}^N V_{i,j} \geq 0
\end{equation}
where 
$\calK_i(p_i)=\nicefrac{p_i^Tp_i}{2m_i}$ is the kinetic energy associated to robot $i$. For ease of notation, define $V_k = V\left(\chi_k\right)$. Hence, it is possible to rewrite~\eqref{eq:Htemp} as:
\begin{equation}
  \label{eq:H}
  H=\sum_{i=1}^N \calK_i(p_i)+\sum_{k=1}^{\bar{N}}V_k \geq 0
\end{equation}

The following result can be trivially derived from~\cite[Proposition 1]{franchi2012}.
\begin{proposition}\label{prop:phfp}
Consider the dynamics of the multi-robot system described in port-Hamiltonian form in~\eqref{eq:pHfleet}, and consider the total energy of the system given in~\eqref{eq:H}. Then, the system is passive with respect to the pair $(F^c+F^e, v)$.
\end{proposition}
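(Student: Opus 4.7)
The plan is to use the total energy $H$ defined in~\eqref{eq:H} directly as the storage function for passivity. Since each $\mathcal{K}_i(p_i) = p_i^T p_i / (2m_i) \geq 0$ and each $V_k \geq 0$ by construction of the artificial potential, $H$ is non-negative and lower-bounded, so it qualifies as a valid storage function. Passivity with respect to $(F^c+F^e, v)$ will follow once I establish that $\dot{H} \leq v^T (F^c + F^e)$ along trajectories of~\eqref{eq:pHfleet}.

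First I would differentiate $H$ along trajectories: by the chain rule,
\begin{equation*}
\dot{H} = \begin{pmatrix} \partial H/\partial p \\ \partial H/\partial \chi \end{pmatrix}^T \begin{pmatrix} \dot p \\ \dot \chi \end{pmatrix}.
\end{equation*}
Substituting the right-hand side of~\eqref{eq:pHfleet} splits $\dot H$ into three contributions: a term quadratic in the gradients through the interconnection matrix $\bigl(\begin{smallmatrix}\mathbb{O} & \mathcal{I} \\ -\mathcal{I}^T & \mathbb{O}\end{smallmatrix}\bigr)$, a term quadratic through the dissipation block $\bigl(\begin{smallmatrix}B & \mathbb{O}\\ \mathbb{O} & \mathbb{O}\end{smallmatrix}\bigr)$, and an input term through $G(F^e+F^c)$. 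The interconnection block is skew-symmetric, so its quadratic form vanishes identically, which is the standard structural property of port-Hamiltonian systems.

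Next I would handle the dissipation and input terms. Because $\partial H/\partial p_i = p_i/m_i = \dot x_i$, one has $\partial H / \partial p = v$, and by the definition of $G$ the output equation yields $G^T(\partial H/\partial p, \partial H/\partial \chi)^T = \partial H/\partial p = v$. The input contribution then simplifies to $v^T(F^c + F^e)$. For the dissipation term, I need $B$ to be positive semidefinite: this follows because $\mathcal{L}_\beta = \mathcal{I}_{\mathcal{G}}\bar B\mathcal{I}_{\mathcal{G}}^T$ is a weighted graph Laplacian with nonnegative weights (hence PSD), and $\diag(b_1,\dots,b_N)$ is positive definite by $b_i>0$, so their sum in~\eqref{eq:damping} is positive definite. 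Therefore the dissipation term equals $-v^T B v \leq 0$.

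Combining the three pieces gives $\dot H = -v^T B v + v^T(F^c + F^e) \leq v^T(F^c+F^e)$, which is exactly the differential dissipation inequality defining passivity with input $F^c+F^e$ and output $v$, with storage function $H$. The only subtlety worth being careful about is the Kronecker-product bookkeeping, i.e.\ that $B$ (which lives in $\mathbb{R}^{N\times N}$) acts on the three-dimensional momentum components correctly through the block structure implicit in~\eqref{eq:pHfleet}; but this is handled by interpreting $B$ as $B\otimes\mathbb{I}_3$ in dimension $3N$, preserving positive semidefiniteness. I expect no deeper obstacle, consistent with the authors' remark that the statement is a direct consequence of~\cite[Proposition 1]{franchi2012}.
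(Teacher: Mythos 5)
Your proposal is correct and follows essentially the same route as the paper: take $H$ as the storage function, differentiate along~\eqref{eq:pHfleet}, use skew-symmetry of the interconnection block to cancel its contribution, and bound the dissipation term via positive definiteness of $B$ to obtain $\dot{H}\leq v^T(F^c+F^e)$, which the paper then integrates to the standard inequality $\int_0^T(F^c+F^e)^Tv\,d\tau\geq -H(0)$. Your added remarks on why $B$ is positive definite and on the $B\otimes\mathbb{I}_3$ bookkeeping are harmless elaborations of details the paper leaves implicit.
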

\begin{proof}
Consider the definition of the total energy of the system $H$ given in~\eqref{eq:H}. Then, considering the dynamics of the multi-robot system given in~\eqref{eq:pHfleet}, the time derivative of $H$ can be computed as follows:
  \begin{equation}
    \label{eq:phfp1}
    \begin{array}[l]{ll}
      \dot{H}=
    \begin{pmatrix}
      \parderT{H}{p} & \parderT{H}{\chi}
    \end{pmatrix}
    \begin{pmatrix}
      \dot{p} \\ \dot \chi
    \end{pmatrix}=    \begin{pmatrix}
      \parderT{H}{p} & \parderT{H}{\chi}
    \end{pmatrix}  \left[      \begin{pmatrix}
          0 & \calI \\-\cal I^T & 0
        \end{pmatrix}-\right.\\
\left.        \begin{pmatrix}
          B & 0 \\ 0 & 0
        \end{pmatrix}\right]
      \begin{pmatrix}
        \parder{H}{p} \\ \parder{H}{\chi}
      \end{pmatrix}+ \begin{pmatrix}
      \parderT{H}{p} & \parderT{H}{\chi}
    \end{pmatrix}   G(F^e+F^c) =\\=-\parderT{H}{p}B\parder{H}{p}+(F^e+F^c)^Tv\leq (F^e+F^c)^Tv
\end{array}
  \end{equation}
Thus,
\begin{equation}
  \label{eq:phfp2}
  \int_0^T(F^c+F^e)^Tvd\tau\geq H(t)-H(0)\geq-H(0)
\end{equation}
which completes the proof.
\end{proof}
Thus, the multi-robot system can safely interact with the environment, but its dynamic behavior is determined by the inter-robot coupling. We will hereafter define a methodology for \emph{scaling} the coupling forces among the robots, with the objective of achieving some desired dynamic viscoelastic behavior.

For ease of notation, we will hereafter make the following assumption:
\begin{assum}
For every time $t>0$, only one robot $i=1,\ldots, N$ exists such that the interaction force with the environment $F_i^e$ is different from zero. 
\end{assum}
Without loss of generality, in the rest of the paper we will always assume the $i$-th robot to be in contact with the environment at a given time $t$.

In order to modify the dynamic viscoelastic behavior in the interaction with the environment, we need to let the $i$-th robot adjust the force that couples it with its neighbors: this is possible by introducing a \emph{scaling factor}. It is worth noting that such a scaling affects the relation 
between the robots and the coupling actions (elastic and damping forces).

%
%
 
 Define now $A_{\calG}=\diag(\alpha_1,\dots, \alpha_{\bar{N}})$ as a diagonal matrix containing scaling factors, one per each pair of robots. If robot $i$ is interacting with the environment, it is sufficient to scale its interaction with the neighbors. Considering the definition of the $i$-th robot's neighborhood given in~\eqref{eq:neighborhood}, the elements of $A_{\calG}$ can be defined as follows:
%
%
\begin{equation}
  \label{eq:scalingfactors}
  \alpha_k=\left\{
    \begin{array}[l]{lll}
\alpha (t)>0      & \mbox{if } \calI_{\calG}\left[i,k\right] \neq 0 \text{ and } \left\|\chi_k\right\| \leq R\\ 
1 & \mbox{otherwise } 
    \end{array}
\right.
\end{equation}

In a similar manner, it is possible to introduce scaling factors for modulating the inter-agent damping. Define then $C_\mathcal{G} = \diag\left(c_1, \ldots, c_{\bar{N}}\right)$ as a diagonal matrix containing a scaling factor per each pair of robots. As in the case of $A_\calG$, define the elements of $C_\mathcal{G}$ as follows:
\begin{equation}
\label{eq:scalingfactorsdamp}
c_k=\left\{
\begin{array}[l]{lll}
\gamma (t)>0      & \mbox{if } \calI_{\calG}\left[i,k\right] \neq 0 \text{ and } \left\|\chi_k\right\| \leq R\\ 
1 & \mbox{otherwise } 
\end{array}
\right.
\end{equation}

In this way, only the edges transmitting coupling forces between $i$ and its neighbors are scaled. In particular, the matrix transmitting the scaled elastic forces to the robots is given by $(\calI_\calG A_\calG(t)) \otimes \mathbb{I}_3$. Exploiting the properties of the Kronecker product we have that:
\begin{multline}
\label{eq:kron}
(\calI_\calG A_\calG(t)) \otimes \mathbb{I}_3=(\calI_\calG A_\calG(t)) \otimes (\mathbb{I}_3\mathbb{I}_3)=\\=(\calI_\calG \otimes \mathbb{I}_3) (A_\calG(t)\otimes \mathbb{I}_3)=\calI A(t)
\end{multline}
where $A(t) = A_\calG(t)\otimes \mathbb{I}_3$. Moreover, the matrix transmitting the scaled viscous forces to the robots is given by
\begin{equation}
\underline{B} = \bar{\mathcal{L}}_\beta + \diag\left(b_1, \dots, b_N\right)
\label{eq:scaleddamping}
\end{equation}
where
\begin{equation}
\bar{\mathcal{L}}_\beta = \calI_{\calG} \left(C_\mathcal{G} \bar{B} \right) \calI_{\calG}^T
\end{equation}

Thus, the model of the multi-robot system~\eqref{eq:pHfleet} is modified as follows, with the introduction of the scaled coupling forces:
\begin{equation}
  \label{eq:scaledpHfleet}
  \resizebox{\hsize}{!}{$
  \left\{
    \begin{array}[l]{ll}
      \begin{pmatrix}
        \dot{p} \\ \dot{\chi}
      \end{pmatrix}=\left[
        \begin{pmatrix}
          \mathbb{O} & \calI A(t)\\-\cal I^T & \mathbb{O}
        \end{pmatrix}-
        \begin{pmatrix}
          \underline{B} & \mathbb{O} \\ \mathbb{O} & \mathbb{O}
        \end{pmatrix}\right]
      \begin{pmatrix}
        \parder{H}{p} \\ \parder{H}{\chi}
      \end{pmatrix}+G(F^e+F^c) \\
v=G^T  \begin{pmatrix}
        \parder{H}{p} \\ \parder{H}{\chi}
      \end{pmatrix}
    \end{array}
\right.
$}
\end{equation}
Comparing~\eqref{eq:scaledpHfleet} with~\eqref{eq:pHfleet}, it is possible to note that scaling of the inter-agent damping does not change the structure of the model: in fact, both  $\underline{B}$ and $B$ are positive definite: therefore, their role in the proof of \refprop{phfp} is analogous.

Conversely, due to the presence of the scaling matrix $A(t)$, the interconnection matrix in~\eqref{eq:scaledpHfleet} is not skew-symmetric as in~\eqref{eq:pHfleet}.
%
%
%
 As a consequence, the result of \refprop{phfp} can not be applied in this case. Intuitively, this is due to the fact that scaling the exchanged forces destroys the power balance among the robots. Nevertheless, we will hereafter show that passivity can still be guaranteed.
Considering time varying gains,~\eqref{eq:scaledpHfleet} can be rewritten as
\begin{equation}
  \label{eq:scaledpHfleet1}
  \resizebox{\hsize}{!}{$
  \left\{
    \begin{array}[l]{ll}
      \begin{pmatrix}
        \dot{p} \\ \dot{\chi}
      \end{pmatrix}=\left[
        \begin{pmatrix}
          \mathbb{O} & \calI \\-\cal I^T & \mathbb{O}
        \end{pmatrix}-
        \begin{pmatrix}
          \underline{B} & \mathbb{O} \\ \mathbb{O} & \mathbb{O}
        \end{pmatrix}\right]
      \begin{pmatrix}
        \parder{H}{p} \\  A(t)\parder{H}{\chi}
      \end{pmatrix}+G(F^e+F^c) \\
v=G^T  \begin{pmatrix}
        \parder{H}{p} \\ \parder{H}{\chi}
      \end{pmatrix}
    \end{array}
\right.
$}
\end{equation}
As a consequence, the energy function~\eqref{eq:H} is now modified as follows: 
\begin{equation}
  \label{eq:scaledH}
  H_s=\sum_{i=1}^N \calK_i(p_i)+\sum_{k=1}^{\bar{N}} \alpha_kV_{k} \geq 0 
\end{equation}

\begin{proposition}\label{prop:sphfp}
Consider the dynamics of the multi-robot system described in port-Hamiltonian form in~\eqref{eq:scaledpHfleet1}, and consider the total energy of the system given in~\eqref{eq:scaledH}. Then, if $\exists \alpha_m, \alpha_M \in\mathbb{R}$, $0<\alpha_m<\alpha_M$, such that $\alpha_m\leq \alpha_k(t) \leq \alpha_M$ for any time $t\geq 0$, then the system is passive with respect to the pair $(F^c+F^e, v)$.		
%
%
\end{proposition}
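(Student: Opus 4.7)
The plan is to follow the pattern of the proof of Proposition 1, using the scaled energy $H_s$ as the candidate storage function, while carefully accounting for the explicit time dependence introduced by $A(t)$.

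First I would note that $H_s$ is the natural Hamiltonian for \eqref{eq:scaledpHfleet1}. Indeed $\partial H_s/\partial p = \partial H/\partial p = v$ and $\partial H_s/\partial \chi = A(t)\,\partial H/\partial \chi$, so the scaling matrix appearing in the upper-right block of the interconnection in \eqref{eq:scaledpHfleet1} can be absorbed into the costate: the system rewrites in the usual port-Hamiltonian form with $H_s$ in place of $H$ and with the original skew-symmetric interconnection matrix. In addition, $\underline{B}$ is positive semi-definite by the same Laplacian-plus-diagonal argument invoked for $B$ in Proposition 1.

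Next I would differentiate $H_s$ along trajectories, remembering the explicit time dependence carried by the $\alpha_k(t)$:
\[
\dot H_s = \left(\frac{\partial H_s}{\partial p}\right)^{T}\!\dot p + \left(\frac{\partial H_s}{\partial \chi}\right)^{T}\!\dot\chi + \sum_{k=1}^{\bar{N}}\dot\alpha_k V_k.
\]
Substituting the dynamics, the two interconnection contributions cancel exactly as in \eqref{eq:phfp1}, because they are opposite scalars, while the dissipation term contributes $-v^{T}\underline{B}v\le 0$. This leaves
\[
\dot H_s \;\leq\; v^{T}(F^{e}+F^{c}) + \sum_{k=1}^{\bar{N}}\dot\alpha_k V_k,
\]
and integrating from $0$ to $T$, together with $H_s(T)\ge 0$, gives
\[
\int_{0}^{T} v^{T}(F^{c}+F^{e})\,d\tau \;\geq\; -H_s(0) - \int_{0}^{T}\sum_{k}\dot\alpha_k V_k\,d\tau.
\]

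The main obstacle is then bounding the correction $\int_{0}^{T}\sum_{k}\dot\alpha_k V_k\,d\tau$ from above by a quantity depending only on the initial state and on the constants $\alpha_m,\alpha_M$. My plan is to apply integration by parts,
\[
\int_{0}^{T}\dot\alpha_k V_k\,d\tau = \bigl[\alpha_k V_k\bigr]_{0}^{T} - \int_{0}^{T}\alpha_k \dot V_k\,d\tau,
\]
and then to exploit the bound $\alpha_m\le\alpha_k\le\alpha_M$ together with the fact that, by construction in \eqref{eq:scalingfactors}, $\alpha_k$ differs from $1$ only on edges with $\|\chi_k\|\le R$, over which $V_k$ is bounded by some constant $V^{\star}$. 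This should yield an estimate $\int_{0}^{T}v^{T}(F^{c}+F^{e})\,d\tau\ge -c^{2}$ with a finite $c^{2}=c^{2}(H_s(0),\alpha_m,\alpha_M,V^{\star})$ independent of $T$, which is precisely the definition of passivity with respect to $(F^{c}+F^{e},v)$. I expect this last bookkeeping, rather than the port-Hamiltonian computation itself, to be the subtle part of the proof, since this is the step at which the two-sided bound on $\alpha_k(t)$ is actually exploited.
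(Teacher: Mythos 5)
Your storage function, the computation of $\dot H_s$ with the extra term $\sum_{k}\dot\alpha_k V_k$, the integration, and the integration by parts all coincide with the paper's proof, up to and including \eqref{eq:sphfp6}. The divergence --- and the gap --- is in the final step, which you yourself flag as the subtle one and then leave as a plan rather than an argument. The paper closes this step by partitioning $[0,T]$, for each edge $k$, into the sets $\Phi_k^+$ and $\Phi_k^-$ on which $\dot V_k\geq 0$ and $\dot V_k<0$ respectively \eqref{eq:timeset}, replacing $\alpha_k(\tau)$ by $\alpha_m$ on $\Phi_k^+$ and by $\alpha_M$ on $\Phi_k^-$ inside the remaining integral $-\int_0^t\alpha_k(\tau)\dot V_k(\tau)\,d\tau$ \eqref{eq:ineqintegr}, and then telescoping each piece into differences of $V_k$ evaluated at the endpoints of the monotonicity intervals \eqref{eq:sphfp10}, whose signs are known and can be combined with $V_k\geq 0$. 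This sign-splitting over monotonicity intervals is the one idea your proposal does not contain.

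Your substitute --- a uniform bound $V_k\leq V^{\star}$ on edges with $\left\|\chi_k\right\|\leq R$ --- does not close the argument, for two reasons. First, such a $V^{\star}$ need not exist: the coupling potential is designed so that the inter-robot distance never drops below $\delta_s$, which in this artificial-potential setting means $V\left(x_{i,j}\right)$ grows without bound as $\left\|x_{i,j}\right\|\to\delta_s$. Second, and more fundamentally, a bound on $\sup_\tau V_k(\tau)$ does not bound $\int_0^t\alpha_k\dot V_k\,d\tau$ from below: that integral is governed by the $\alpha_k$-weighted total variation of $V_k$, not by its supremum. If $V_k$ cycles $n$ times between $0$ and $V^{\star}$ with $\alpha_k$ near $\alpha_m$ while $V_k$ increases and near $\alpha_M$ while it decreases, then $\int_0^t\alpha_k\dot V_k\,d\tau\approx -n\left(\alpha_M-\alpha_m\right)V^{\star}$, which is unbounded as $n$ grows, so the constant $c^2$ you posit cannot be chosen independently of $T$ by this route. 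This is the standard obstruction to passivity of a time-varying stiffness, and it is exactly the point at which the paper deploys its interval decomposition; any completed proof has to confront it explicitly rather than absorb it into a sup bound on the potentials.
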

\begin{proof}
  Consider the following energy function:
  \begin{equation}
    \label{eq:sphfp1}
    H_s(t)= \sum_{i=1}^N \calK_i(p_i)+\sum_{k=1}^{\bar{N}}\alpha_k(t)V_{k}
  \end{equation}
Following the same steps taken in the proof of \refprop{phfp}, we have that 
\begin{equation}
  \label{eq:sphfp3}
  \dot{H}_s=(F^c+F^e)^Tv-\parderT{H_s}{p}\underline{B}\parder{H_s}{p}+\parder{H}{t}
\end{equation}
and consequently, by integrating and by reminding that $\underline{B}$ is positive definite:
\begin{equation}
  \label{eq:sphfp4}
  H_s(t)-H_s(0)\leq\int_0^t(F^c+F^e)^Tvd\tau+\int_0^t\parder{H}{\tau}d\tau
\end{equation}
%
Integrating by parts we get
\begin{equation}
  \label{eq:sphfp6}
  \begin{array}[l]{ll}
\displaystyle \int_0^t\parder{H}{\tau}d\tau=\!\!\int_0^t\sum_{k=1}^{\bar{N}}\dot{\alpha}_k(\tau)V_k(\tau)d\tau=\\ =\!\displaystyle \sum_{k=1}^{\bar{N}}\alpha_k(t)V_k(t)-\!  \sum_{k=1}^{\bar{N}}\alpha_k(0)V_k(0)-\!\int_0^t\! \sum_{k=1}^{\bar{N}}\alpha_k(\tau)\dot{V}_k(\tau)d\tau
  \end{array}
\end{equation}

For any time interval $\left[0,T\right]$, and for any $k=1, \ldots, \bar{N}$, it is possible to define the following subsets:
\begin{equation}
\begin{aligned}
\Phi_k^+ = \left\{t \in \left[0,T\right] \text{ such that } \dot{V}_k(t) \geq 0  \right\}\\
\Phi_k^- = \left\{t \in \left[0,T\right] \text{ such that } \dot{V}_k(t) < 0  \right\}
\end{aligned}
\label{eq:timeset}
\end{equation}
The subsets $\Phi_k^+$ and $\Phi_k^-$ are the union of a finite number of disjoint time intervals\footnote{Pathological situations might exist in which the number of time intervals that compose  $\Phi_k^+$ and $\Phi_k^-$ is not finite. However, in practical situations this does never happen.}. Let $T_k^+$ and $T_k^-$ be the number of time intervals that compose $\Phi_k^+$ and $\Phi_k^-$, respectively. Let $\underline{t}_{k,h}^+, \bar{t}_{k,h}^+$ be the initial and final times of the $h$-th time interval of  $\Phi_k^+$, and let $\underline{t}_{k,h}^-, \bar{t}_{k,h}^-$ be the initial and final times of the $h$-th time interval of  $\Phi_k^-$. Then, the subsets can be defined as follows:
\begin{equation}
\begin{aligned}
\Phi_k^+ = \left[\underline{t}_{k,1}^+, \bar{t}_{k,1}^+\right] \bigcup \ldots \bigcup \left[\underline{t}_{k,T_k^+}^+, \bar{t}_{k,T_k^+}^+\right] \\
\Phi_k^- = \left[\underline{t}_{k,1}^-, \bar{t}_{k,1}^-\right] \bigcup \ldots \bigcup \left[\underline{t}_{k,T_k^-}^-, \bar{t}_{k,T_k^-}^-\right]
\end{aligned}
\label{eq:timeintervals}
\end{equation}
where
\begin{equation}
\begin{aligned}
0 \leq \underline{t}_{k,1}^+ < \bar{t}_{k,1}^+ < \ldots < \underline{t}_{k,T_k^+}^+ < \bar{t}_{k,T_k^+}^+ \leq T  \\
0 \leq \underline{t}_{k,1}^- < \bar{t}_{k,1}^- < \ldots < \underline{t}_{k,T_k^-}^- < \bar{t}_{k,T_k^-}^- \leq T
\end{aligned}
\label{eq:sequencetimes}
\end{equation}
%
%
Hence, we can rewrite the integral in~\eqref{eq:sphfp6} as follows:
\begin{equation}
\resizebox{\hsize}{!}{$
\begin{array}{l}
\displaystyle\int_0^t \sum_{k=1}^{\bar{N}}\alpha_k(\tau)\dot{V}_k(\tau)d\tau =\\ \displaystyle \sum_{k=1}^{\bar{N}}\left( \int_{\Phi_k^+} \alpha_k(\tau)\dot{V}_k(\tau)d\tau +  \int_{\Phi_k^-} \alpha_k(\tau)\dot{V}_k(\tau)d\tau \right) =\\
\displaystyle \sum_{k=1}^{\bar{N}}\left(\sum_{h=1}^{T_k^+} \int_{\underline{t}_{k,h}^+}^{\bar{t}_{k,h}^+} \alpha_k(\tau)\dot{V}_k(\tau)d\tau + \sum_{h=1}^{T_k^-} \int_{\underline{t}_{k,h}^-}^{\bar{t}_{k,h}^-} \alpha_k(\tau)\dot{V}_k(\tau)d\tau\right)
\end{array}
$}
\label{eq:integrewr}
\end{equation}
Since $0\leq \alpha_m \leq \alpha(t) \leq \alpha_M$, according to the definition given in~\eqref{eq:timeintervals}, it is possible to obtain the following inequality:
\begin{equation}
\begin{array}{l}
-\displaystyle\int_0^t \sum_{k=1}^{\bar{N}}\alpha_k(\tau)\dot{V}_k(\tau)d\tau \leq\\ 
\displaystyle -\sum_{k=1}^{\bar{N}}\left(\alpha_m\sum_{h=1}^{T_k^+} \int_{\underline{t}_{k,h}^+}^{\bar{t}_{k,h}^+} \dot{V}_k(\tau)d\tau + \alpha_M \sum_{h=1}^{T_k^-} \int_{\underline{t}_{k,h}^-}^{\bar{t}_{k,h}^-} \dot{V}_k(\tau)d\tau\right)
\end{array}
\label{eq:ineqintegr}
\end{equation}
Hence, from~\eqref{eq:sphfp6} we obtain the following:
\begin{equation}
\label{eq:sphfp6b}
\begin{array}[l]{ll}
\displaystyle \int_0^t\parder{H}{\tau}d\tau\leq  \sum_{k=1}^{\bar{N}}\alpha_k(t)V_k(t)-\sum_{k=1}^{\bar{N}}\alpha_k(0)V_k(0)-\\
\displaystyle -\sum_{k=1}^{\bar{N}}\left(\alpha_m\sum_{h=1}^{T_k^+} \int_{\underline{t}_{k,h}^+}^{\bar{t}_{k,h}^+} \dot{V}_k(\tau)d\tau + \alpha_M \sum_{h=1}^{T_k^-} \int_{\underline{t}_{k,h}^-}^{\bar{t}_{k,h}^-} \dot{V}_k(\tau)d\tau\right)
\end{array}
\end{equation}
%
%
%
%
%
Thus, from~\eqref{eq:sphfp4}, the following inequality can be derived:
\begin{equation}
\resizebox{\hsize}{!}{$
\begin{array}[l]{ll}
H_s(t)-H_s(0)\leq \displaystyle\int_0^t(F^c+F^e)^Tvd\tau+\sum_{k=1}^{\bar{N}}\alpha_k(t)V_k(t)-\\
\displaystyle-\sum_{k=1}^{\bar{N}}\alpha_k(0)V_k(0)- \alpha_m\sum_{k=1}^{\bar{N}}\sum_{h=1}^{T_k^+}V_k\left(\bar{t}_{k,h}^+ \right) +\alpha_m\sum_{k=1}^{\bar{N}}\sum_{h=1}^{T_k^+}V_k\left(\underline{t}_{k,h}^+ \right)-\\
\displaystyle - \alpha_M\sum_{k=1}^{\bar{N}}\sum_{h=1}^{T_k^-}V_k\left(\bar{t}_{k,h}^- \right) +\alpha_M\sum_{k=1}^{\bar{N}}\sum_{h=1}^{T_k^-}V_k\left(\underline{t}_{k,h}^- \right)
\end{array}
$}
\label{eq:sphfp8}
\end{equation}
Hence, considering the definition of $H_s(t)$ given in~\eqref{eq:sphfp1}, it is possible to obtain the following inequality:
%
%
%
%
%
%
%
%
\begin{equation}
\label{eq:sphfp10}
\begin{array}[l]{ll}
\displaystyle\int_0^t(F^c+F^e)^Tvd\tau\geq\sum_{i=1}^N(K_i(t)-K_i(0))+\\
\displaystyle+\alpha_m\sum_{k=1}^{\bar{N}}\sum_{h=1}^{T_k^+}\left(V_k\left(\bar{t}_{k,h}^+ \right) -V_k\left(\underline{t}_{k,h}^+ \right)\right)+\\
\displaystyle + \alpha_M\sum_{k=1}^{\bar{N}}\sum_{h=1}^{T_k^-}\left(V_k\left(\bar{t}_{k,h}^- \right) - V_k\left(\underline{t}_{k,h}^- \right) \right)
\end{array}
\end{equation}
%
%
Since both $\alpha_m$ and $\alpha_M$ are positive, and since both the kinetic energy $K_i\left(\cdot\right)$ and the potentials $V_k\left(\cdot\right)$ are positive, it is possible to obtain the following:
\begin{equation}
\label{eq:sphfp11}
	\displaystyle\int_0^t(F^c+F^e)^Tvd\tau\geq -\sum_{i=1}^NK_i(0)
\end{equation}
%
%
%
which proves the passivity.
\end{proof}
\section{Tunable Interaction with the Environment}\label{sec:variation}

In this Section we will show how to utilize the methodology introduced so far for adjusting the parameters of the inter-robot coupling, in a local manner, in order to achieve a desired viscoelastic dynamic behavior in the interaction with the environment. We will hereafter assume that the robots' mass is sufficiently small, such that inertial forces can be neglected. Hence, 
%
 the force robot $i$ applies to the environment is equal to 
\begin{equation}
F_i=\alpha\left(t\right)\nquad \sum_{j=1, j\neq i}^{N}\nsquad \nabla V_{i,j}+F^c_i + \gamma\left(t\right)\nquad\sum_{j=1, j\neq i}^{N}\nsquad\beta_{i,j}\left(\dot{x}_i - \dot{x}_j\right)+ b_i\dot{x}_i
\label{eq:forcei}
\end{equation} 
Since the elastic coupling term between any two robots is only a function of their relative positions, it is always possible to write it as follows:
\begin{equation}
\nabla V_{i,j} = \kappa_{i,j}\left(x_i,x_j\right)\,\left(x_i - x_j\right)
\end{equation}
For ease of notation, we will hereafter omit the dependency of $\kappa_{i,j}$ on $x_i,x_j$.  Hence,~\eqref{eq:forcei} can be rewritten as follows:
\begin{equation}
\resizebox{\hsize}{!}{$
F_i\!=\! \alpha\left(t\right)\nquad \displaystyle\sum_{j=1, j\neq i}^{N}\nsquad \kappa_{i,j}\left(x_i - x_j\right)+F^c_i + \gamma\left(t\right)\nquad\displaystyle\sum_{j=1, j\neq i}^{N}\nsquad\beta_{i,j}\left(\dot{x}_i - \dot{x}_j\right)+ b_i\dot{x}_i
$}
\label{eq:forcei2}
\end{equation}

This force can be modeled as a single standard viscoelastic force as follows:
\begin{equation}
F_i = \kappa_n\left(x_i-\bar{x}\right) + F^c_i + \beta_n\left(\dot{x}_i - \bar{v}\right) + b_i\dot{x}_i
\label{eq:viscoelastic}
\end{equation}
Let $\Delta$ be a constant representing the rest-length of a standard elastic element, defined based on the application. Then, the nominal stiffness $\kappa_n$ and rest position $\bar{x}$ of the spring are defined as follows:
\begin{equation}
\left( x_i-\bar{x} \right) = \pm \Delta\! {\displaystyle \sum_{j=1, j\neq i}^{N}\nsquad \kappa_{i,j}\left(x_i - x_j\right)}\Big/{\left\| \displaystyle\sum_{j=1, j\neq i}^{N}\nsquad \kappa_{i,j}\left(x_i - x_j\right) \right\|}
\label{eq:ximinusbarx}
\end{equation}
\begin{equation}
\kappa_n = {\left\|\alpha\left(t\right)\nquad \displaystyle \sum_{j=1, j\neq i}^{N}\nsquad \kappa_{i,j}\left(x_i - x_j\right)\right\|}\Big/{\left\|x_i - \bar{x}\right\|}
\label{eq:kappan}
\end{equation}
The value of $\bar{x}$ is then defined according to~\eqref{eq:ximinusbarx} in such a way that the following holds:
\begin{equation}
\kappa_n\left(x_i-\bar{x}\right) = \alpha\left(t\right)\nquad \displaystyle\sum_{j=1, j\neq i}^{N}\nsquad \kappa_{i,j}\left(x_i - x_j\right)
\end{equation}
Parameters $\bar{v}$ and $\beta_n$ are defined in an analogous manner.

It is worth noting that $\bar{x}$ plays the role of the desired position for robot $i$ in stiffness control. Hence, it is possible to define a desired dynamic behavior for the multi-robot system, in terms of a desired viscoelastic dynamics, as follows:
\begin{equation}
F_d = \kappa_d\left(x_i-\bar{x}\right) + F^c_i + \beta_d\left(\dot{x}_i - \bar{v}\right) + b_i\dot{x}_i
\label{eq:desireddyn}
\end{equation} 
for some desired $\kappa_d>0$, $\beta_d>0$. As discussed in Section~\ref{sec:tunable}, the damping coefficient can be freely adjusted with an appropriate choice of $\gamma\left(t\right)$. Therefore, imposing $\beta_n = \beta_d$, we obtain
\begin{equation}
\gamma\left(t\right)= {\beta_d\left\|\dot{x}_i - \bar{v}\right\|}\Big/{\left\|\displaystyle\sum_{j=1, j\neq i}^{N}\nsquad\beta_{i,j}\left(\dot{x}_i - \dot{x}_j\right)\right\|}
\label{eq:gamma}
\end{equation}
Conversely, 
%
  the desired elastic stiffness can be achieved exploiting the results of Proposition~\ref{prop:sphfp}. 

In particular, it is possible to tune the coupling between the $i$-th robot and its neighbors utilizing the parameter $\alpha(t)$: the objective is that of minimizing the difference between $F_i$ and $F_d$. Namely, considering the elastic terms in~\eqref{eq:viscoelastic} and~\eqref{eq:desireddyn}, it is necessary to minimize the following cost function:
\begin{equation}
f\left(\alpha\right) = \left(\kappa_d\left(x_i-\bar{x}\right) - \alpha\nquad \sum_{j=1, j\neq i}^{N}\nsquad \kappa_{i,j}\left(x_i - x_j\right) \right)^2
\label{eq:costfun}
\end{equation}
Hence, we can define the following simple quadratic optimization problem:
\begin{equation}
\begin{array}{ll}
\text{minimize} \, & f\left(\alpha\right) \\
\text{subject to} \, & \alpha_m \leq \alpha \leq \alpha_M
\end{array}
\label{eq:opt_prob}
\end{equation}
where $\alpha_M>\alpha_m > 0$ are the upper- and lower-bounds for $\alpha(t)$, respectively, defined according to Proposition~\ref{prop:sphfp}.


In a decentralized multi-robot system, decisions are taken in a local manner, without any centralized elaboration unit. As a consequence, if the $i$-th robot is in contact with the environment, it will locally solve the optimization problem in~\eqref{eq:opt_prob} and find the desired value $\alpha^\star$. It is then necessary for the $i$-th robot to broadcast this value to its neighbors, in such a way that the coupling actions can be tuned as required.

This procedure needs to be performed as soon as the $i$-th robot perceives an interaction force with the environment. Furthermore, the desired value $\alpha^\star$ needs to be periodically recomputed, every $\bar{T}>0$ seconds, based on the current force measurements. 

\section{Simulations}\label{sec:simulations}
In this Section we describe the results of the evaluation of the proposed control method. In particular, several simulations were performed in an environment developed in MATLAB\textsuperscript{\textregistered}. Specifically, a variable number of three-dimensional double integrator robots, modeled according to~\eqref{eq:agent}, was considered. Let the three-dimensional environment be defined by the $\left(\mathsf{x,y,z}\right)$ axes.

We utilized the coupling artificial potential field defined in~\cite{leonard2001}, 
 with the following parameter set: $\delta_s = 5$, $\delta_d = 15$, $R = 22$. The additional control input $F_i^c\in\mathbb{R}^3$ was utilized for imposing a motion of the group of robots along the $\mathsf x$ axis. 
%
 A point obstacle was then placed in the environment, and a repulsive artificial potential field was activated for those robots whose distance from the obstacle was smaller than $\delta_d$, in order to guarantee that the distance remained larger than $\delta_s$.

 For each simulation run, the number $N$ of robots was defined, and initial positions were randomly chosen.  Since, as detailed in Section~\ref{sec:variation}, the parameter $\gamma\left(t\right)$ can be arbitrarily chosen to achieve the desired damping, the evaluation focused on the stiffness tuning. Then, two cases were considered:
\begin{enumerate}
	\item The \emph{nominal} case, where we utilized a constant coupling gain $\alpha\left(t\right)=30$.
	\item The \emph{tunable stiffness} case, where we utilized the desired stiffness $\kappa_d = 1$. In this case, the optimization problem~\eqref{eq:opt_prob} was defined with $\alpha_m=10^{-4}$, and $\alpha_M = 10^2$.
\end{enumerate}

The results of a representative simulation run are depicted in Fig.~\ref{fig:sim_64}, where we utilized, $N=64$ robots: the tunable stiffness case is compared to the nominal one.

The value of the cost function $f\left(\alpha\right)$ defined in~\eqref{eq:costfun} is depicted in Fig.~\ref{fig:64_cf}. 
 The cost function was evaluated only when a robot was in contact with the obstacle. As expected, the value of $f\left(\alpha\right)$ is typically very large for the nominal case, while it becomes very small for the tunable stiffness case. 

In order to evaluate the deviation from the nominal behavior, we measured the position of the barycenter of the multi-robot system. The percentage deviation in the tunable stiffness case with respect to the nominal one is depicted in Fig.~\ref{fig:64_bary} where, due to space limitations, only the component along the $\mathsf{x}$ axis is depicted. It is possible to note that large deviations are observed only during limited periods of time, in particular when a robot is in contact with the obstacle. 

The accompanying video shows a few examples of simulation runs, where different numbers of robots were utilized. It is possible to note that, in the nominal case, the multi-robot system behaves as a rigid body, while in the tunable stiffness case the coupling between the robots is locally weakened, when in contact with the obstacle, in order to achieve the desired dynamic behavior.

\begin{figure}[bt]
	\centering
	\subfigure[Cost function $f\left(\alpha\right)$ defined in~\eqref{eq:costfun}: red solid line for the nominal case, green dashed line for the tunable stiffness case]{\includegraphics[width=\columnwidth]{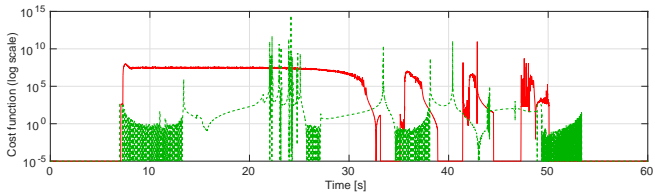}\label{fig:64_cf}}
	\subfigure[Barycenter position along the x axis: percentage deviation in the tunable stiffness case]{\includegraphics[width=\columnwidth]{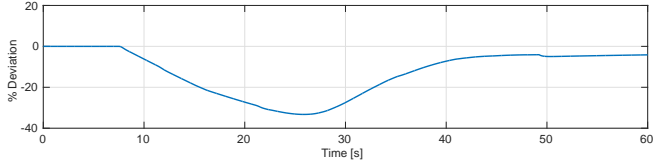}\label{fig:64_bary}}
	\caption{Simulation performed with 64 robots}%
	\label{fig:sim_64}%
\end{figure}



\section{Conclusions}\label{sec:conclusions}

In this paper we addressed the problem of controlling the dynamic behavior of a multi-robot system interacting with the environment. In particular, we proposed a general methodology that, introducing a local scale factor on the inter-robot couplings, leads to achieving a desired overall viscoelastic dynamic behavior. 
 The proposed method was shown to guarantee passivity preservation, thus ensuring a safe interaction. 


Throughout the paper, we assumed only one robot at a time to be in contact with the environment. Future work will aim at relaxing this assumption, considering multiple robots simultaneously in contact with the environment. 

\bibliographystyle{IEEEtran}
\bibliography{biblio}
\end{document}